\documentclass{article}

\usepackage{microtype}
\usepackage{graphicx}
\usepackage{subcaption}
\usepackage{booktabs} 
\usepackage{multirow, makecell}
\usepackage{multirow}
\usepackage[dvipsnames]{xcolor}

\usepackage{hyperref}

\usepackage[preprint]{icml2026}

\usepackage{amsmath}
\usepackage{amssymb}
\usepackage{mathtools}
\usepackage{amsthm}

\usepackage[capitalize,noabbrev]{cleveref}

\theoremstyle{plain}
\newtheorem{theorem}{Theorem}[section]
\newtheorem{proposition}[theorem]{Proposition}
\newtheorem{lemma}[theorem]{Lemma}

\theoremstyle{definition}
\newtheorem{definition}[theorem]{Definition}

\theoremstyle{remark}

\usepackage[textsize=tiny]{todonotes}

\icmltitlerunning{Jacobian Regularization Stabilizes Long-Term Integration of Neural Differential Equations}

\begin{document}

\twocolumn[
  \icmltitle{Jacobian Regularization Stabilizes Long-Term Integration  \\
  of Neural Differential Equations}



  \icmlsetsymbol{equal}{*}

  \begin{icmlauthorlist}
    \icmlauthor{Maya Janvier}{locean}
    \icmlauthor{Julien Salomon}{inria,ljll}
    \icmlauthor{Etienne Meunier}{locean,inria}
  \end{icmlauthorlist}

  \icmlaffiliation{locean}{LOCEAN, Sorbonne University, Paris, France}
  \icmlaffiliation{inria}{ANGE team, INRIA, Paris, France}
  \icmlaffiliation{ljll}{LJLL, Sorbonne University, Paris, France}

  \icmlcorrespondingauthor{Maya Janvier}{maya.janvier@locean.ipsl.fr}

  \icmlkeywords{neural differential equations, neural ordinary differential equations, stabilization, dynamical systems, dynamics, conservation laws, scientific machine learning, physics-informed machine learning}

  \vskip 0.3in
]



\printAffiliationsAndNotice{}  

\begin{abstract}
    Hybrid models and Neural Differential Equations (NDE) are getting increasingly important for the modeling of physical systems, however they often encounter stability and accuracy issues during long-term integration. Training on unrolled trajectories is known to limit these divergences but quickly becomes too expensive due to the need for computing gradients over an iterative process. In this paper, we demonstrate that regularizing the Jacobian of the NDE model via its directional derivatives during training stabilizes long-term integration in the challenging context of short training rollouts. We design two regularizations, one for the case of known dynamics where we can directly derive the directional derivatives of the dynamic and one for the case of unknown dynamics where they are approximated using finite differences. Both methods, while having a far lower cost compared to long rollouts during training, are successful in improving the stability of long-term simulations for several ordinary and partial differential equations, opening up the door to training NDE methods for long-term integration of large scale systems.
\end{abstract}

\section{Introduction}
\label{sec:intro}

In Neural Ordinary Differential Equations (Neural ODE), a neural network (NN) is used as the dynamical function or right-hand side of an ODE and is integrated through a numerical solver to produce the next step of the trajectory of a physical system \cite{chen2019neuralordinarydifferentialequations}. This extension of ResNets to other solvers has gained in popularity across different deep learning tasks: not only the modeling of physical systems \cite{lippe2023pderefinerachievingaccuratelong, white2024stabilizedneuraldifferentialequations, Yin_2021, Linot_2023, pal2025semiexplicitneuraldaeslearning}, but also for classification tasks \cite{yan2022robustnessneuralordinarydifferential, kang2021stableneuralodelyapunovstable} and generative tasks \cite{finlay2020trainneuralodeworld, song2021scorebasedgenerativemodelingstochastic}. For all of these tasks, the robustness and generalization ability of the models are very important to ensure stable and accurate long-term predictions in the case of physical modeling, and resilience to adversarial attacks in general. The Jacobian is a crucial mathematical concept in stability analysis \cite{Margazoglou_2023} and has been leveraged in many different ways for deep learning applications. 

In the context of adversarial attacks, the upper bound on the norm of the Jacobian, the Lipschitz constant, has sparked a lot of interest: a small Lipschitz constant guaranties that similar inputs in terms of a norm produce similar outputs for the same norm (for example, similar class labels). Unfortunately, estimating computationally efficient Lipschitz constant for training is not an easy task. \citeauthor{scaman2019lipschitzregularitydeepneural} \yrcite{scaman2019lipschitzregularitydeepneural} showed that its exact computation is NP-hard even for 2-layer networks and developed AutoLip and SeqLip to accelerate its estimation. Another issue they mention is the difficulty not to overestimate the Lipschitz constant: \citeauthor{pintore2024computablelipschitzboundsdeep} \yrcite{pintore2024computablelipschitzboundsdeep} build accurate upper bounds of it but they remain expensive compute. 

We see that there is a strong balance between accuracy and efficiency in this task, that authors tackle differently: some compute a generalized jacobian via means of the chain rule to estimate the Lipschitz constant \cite{jordan2021exactlycomputinglocallipschitz} while others leverage semi-definite relaxations and Integral Quadratic Constraints (IQC) \cite{raghunathan2018semidefiniterelaxationscertifyingrobustness, fazlyab2023efficientaccurateestimationlipschitz}. All these methods regularize the NN with their estimation of the Lipschitz constant with fixed penalization or more complex optimization of it \cite{Pauli_2022}. Others directly modify the architecture of the NN by building Lipschitz layers \cite{meunier2022dynamicalperspectivelipschitzneural, wang2023directparameterizationlipschitzboundeddeep} or new networks to satisfy IQC for system identification problems \cite{revay2023recurrentequilibriumnetworksflexible}. TisODE \cite{yan2022robustnessneuralordinarydifferential} leverages both approaches by removing the time dependence of the dynamics and enforcing the convergence to a steady state for robustness. 

Other works focused on the eigenvalues of the Jacobian to enforce different stability properties: \citeauthor{kang2021stableneuralodelyapunovstable} \yrcite{kang2021stableneuralodelyapunovstable} forced the Jacobian of the ODE in the Neural ODE layer to have eigenvalues with negative real parts at the Lyapunov-stable equilibrium points, while \citeauthor{westny2024stabilityinformedinitializationneuralordinary} \yrcite{westny2024stabilityinformedinitializationneuralordinary} showed that ensuring that the Jacobian eigenvalues were within the stability region of the integration scheme (Euler, Runge-Kutta schemes) was beneficial for the training of model. Jacobian regularization as an analog to gradient regularization \cite{novak2018sensitivitygeneralizationneuralnetworks} has been demonstrated to accelerate training for Neural ODE based generative models \cite{finlay2020trainneuralodeworld}.

Neural ODE instabilities are often encountered when doing long-rollouts of physical systems \cite{frezat2021posteriorilearningquasigeostrophicturbulence, list2024differentiabilityunrolledtrainingneural}, which are crucial for systems such as climate models. The issue was mainly approached from an energetic point of view until now: \citeauthor{Linot_2023} \yrcite{Linot_2023} introduced dissipation in the Neural ODE formulation with a linear term. As solutions to the equations sometimes lay in certain known manifolds (in the form of conservation laws for example), several works regularize \cite{white2024stabilizedneuraldifferentialequations} or project the Neural ODE onto this manifold during training \cite{pal2025semiexplicitneuraldaeslearning}. Both methods leverage products of the Jacobian in order to achieve their stabilization. The PDE Refiner method combines denoising models and Neural ODE to iteratively refine the solution in order to learn across all frequencies and not only the dominant one \cite{lippe2023pderefinerachievingaccuratelong}.

In our work, we tackle long-term instabilities in Neural ODE with Jacobian regularizations involving directional derivative estimations, in the spirit of adversarial attack works. This is, to the best of our knowledge, the first attempt of this kind in the modeling of physical systems. We first present the challenges of accurate long rollouts for Neural ODEs and their particular link with the Jacobian \ref{sec:challenge}, then present our methods for estimating Jacobian norms at a reasonable cost (\ref{sec:method}). We validate our method on autonomous systems (two ODE and one PDE) (\ref{sec:results}) and finally discuss the results and next challenges (\ref{sec:conclu}). 

\section{Challenges of Accurate Long Rollouts for Neural Differential Equations}
\label{sec:challenge}

Let us consider a general Partial Differential Equation (or PDE) in the form of 
\begin{equation}
    \frac{\partial \textbf{x}}{\partial t}=F(t, \textbf{x}, \nabla \textbf{x}, \nabla ^2 \textbf{x}, ...)
\end{equation}
with \textbf{x} the prognostic variables evolving temporally through the dynamic function $F$. To simulate trajectories of \textbf{x}, we numerically integrate the PDE forward in time using a temporal integration scheme or solver (e.g. Euler forward, Runge-Kutta 4) with a time-step $\Delta t$. However, we usually have an incomplete knowledge of $F$ and still want to be able to simulate trajectories. The concept of Neural ODE \cite{chen2019neuralordinarydifferentialequations} is to train a neural network $F_\theta$ from ground truth trajectories to play the role of $F$. The length of those trajectories we feed to the neural network, measured in number of integration steps performed through the solver, is called the training rollout $N$. In this section, we present our approach in the framework of Neural ODE. 

\subsection{Learning through solvers: the influence of rollout}
Neural ODE models \cite{chen2019neuralordinarydifferentialequations} are trained through numerical solvers. Consider a neural network $F_\theta$ representing the right-hand side or dynamics of an (here autonomous) ODE of the form $\frac{dx_\theta(t)}{dt}=F_\theta(x_\theta(t))$, associated with an integration scheme (or solver) and a step-size $\Delta t$, that aims to produce the same trajectories as the true ODE $\frac{dx(t)}{dt}=F(x(t))$, namely $X \in \mathcal{T}_F=\{x_i=x_0+\int_{t_0}^{t_i}F(x(s))ds, 0 \leq i \leq N\}$. The training is carried out by minimizing the trajectory loss \cite{chen2019neuralordinarydifferentialequations, Yin_2021, Linot_2023}:
\begin{equation}
    \mathcal{L}^N_{\text{traj}}(\theta)=\sum_{X\in \mathcal{T}_{F}}\sum_{i=0}^N\Vert x(t_i)-\tilde{x}_\theta(t_i)\Vert_2^2  \text{,} \quad t_i=i\Delta t
\end{equation}
with $\tilde{x}_\theta(t_i)$ obtained by numerically integrating the ODE forward in time for $0 \leq i \leq N$ steps.

In offline training, the Neural ODE is tasked to minimize $\Vert F(x)-F_\theta(x)\Vert_2^2$, so that learning $F$ is consequently achieved in a direct way, without using the solver. However, this method needs access to the true $F$ which can be unknown, and may lead to diverging trajectories for long-term integration at inference in most cases \cite{frezat2021posteriorilearningquasigeostrophicturbulence, list2024differentiabilityunrolledtrainingneural}.

On the other hand, performing the training with $N>1$ (or online training), i.e., through the solver, only requires access to trajectory points (and no longer to $F$) and results in much more stable rollouts \cite{ frezat2021posteriorilearningquasigeostrophicturbulence, list2024differentiabilityunrolledtrainingneural}. However, this enhancement comes at the heavy cost of computing the gradient through the solver for each time-step which quickly becomes a bottleneck for long training rollouts $N$. The interest of long rollouts is also limited in practice as errors in gradient estimation accumulate and result in unstable training \cite{list2024differentiabilityunrolledtrainingneural}.

Experimentally, Neural ODE trained on small $N$ are not stable during inference. There is thus a need to produce stable long-term Neural ODE simulations based on short training rollouts, especially as long rollouts would simply become too costly for more complex and high-dimensional problems. 

\subsection{Jacobian and long-term stability}
The Jacobian of a function $F: \mathbb{R}^n \rightarrow \mathbb{R}^m$ is defined as the matrix $J_F$ of all its first-order partial derivatives, and is involved in stability results through its Lipschitz constant.

\begin{definition}
A function $F$ is Lipschitz continuous if there exists $L \in \mathbb{R}^+$ such that $\forall x,y \in \mathbb{R}^n \times \mathbb{R}^n$:
\begin{equation*}
    \Vert F(x)-F(y)\Vert_2 \leq L \Vert x-y\Vert_2 .
\end{equation*}
$L$ is called a Lipschitz constant of $F$ and when it is differentiable, the optimal (smallest) Lipschitz constant is defined as $L_F = \sup_{x\in \mathbb{R}^n} \Vert J_F(x)\Vert_2$. 
    
\end{definition}

This Lipschitz constant is directly related to the long-term stability of Neural ODEs. Let us consider the true continuous trajectory of $x$ from the true dynamics $F$ and the predicted $x_\theta$ from the learned continuous dynamics $F_\theta$, with the same initial conditions, i.e., $x(t=0)=x_\theta(t=0)$. The offline error between $F$ and $F_\theta$ is denoted $\epsilon_\theta=F_\theta-F$. Using Grönwall's lemma we get for the trajectory error:

\begin{proposition} Trajectory error bound

For $t>0$, 
\begin{equation}
    \Vert x(t)-x_\theta(t)\Vert_2 \leq \frac{\Vert\epsilon_\theta\Vert_\infty}{L_{F_\theta}} (\exp (L_{F_\theta}t)-1)
\end{equation}
with $L_{F_\theta}$ the Lipschitz constant of $F_\theta$.
\end{proposition}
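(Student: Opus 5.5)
The plan is to set $e(t)=x(t)-x_\theta(t)$ and derive a differential inequality for $\Vert e(t)\Vert_2$, then close it with Grönwall's lemma. Both trajectories solve their ODEs with the same initial condition, so $e(0)=0$. Writing them in integral form, we get
\begin{equation*}
    e(t)=\int_0^t \bigl(F(x(s))-F_\theta(x_\theta(s))\bigr)\,ds .
\end{equation*}
The key step is to split the integrand by adding and subtracting $F_\theta(x(s))$:
\begin{equation*}
    F(x(s))-F_\theta(x_\theta(s)) = \bigl(F_\theta(x(s))-F_\theta(x_\theta(s))\bigr) - \epsilon_\theta(x(s)).
\end{equation*}
The Lipschitz property of $F_\theta$, via the Definition above, controls the first term by $L_{F_\theta}\Vert e(s)\Vert_2$. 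The second term is bounded by $\Vert\epsilon_\theta\Vert_\infty$, understood as $\sup_x\Vert F_\theta(x)-F(x)\Vert_2$. I choose to perturb along the true trajectory, so that the Lipschitz constant of the \emph{learned} model appears, as in the statement.

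This yields the integral inequality
\begin{equation*}
    u(t)\le \Vert\epsilon_\theta\Vert_\infty\, t + L_{F_\theta}\int_0^t u(s)\,ds ,
    \qquad u(t)=\Vert e(t)\Vert_2 .
\end{equation*}
I would apply Grönwall's lemma in the sharp linear form rather than the crude version $u\le \alpha(t)e^{Lt}$, since the crude version gives only $\Vert\epsilon_\theta\Vert_\infty\, t\, e^{Lt}$.

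The cleanest route is to compare with the solution $v$ of $v'=\Vert\epsilon_\theta\Vert_\infty + L_{F_\theta} v$, $v(0)=0$. This solution is exactly
\begin{equation*}
    v(t)=\frac{\Vert\epsilon_\theta\Vert_\infty}{L_{F_\theta}}\bigl(e^{L_{F_\theta}t}-1\bigr).
\end{equation*}
Concretely, set $U(t)=\int_0^t u$. Then $U'-L_{F_\theta}U\le \Vert\epsilon_\theta\Vert_\infty t$. Multiplying by $e^{-L_{F_\theta}t}$ and integrating gives a bound on $U$. Substituting back into the integral inequality then gives $u\le v$ after an elementary integration by parts.

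The main (mild) obstacle is this last computation: getting exactly the constant $\frac{1}{L}(e^{Lt}-1)$ rather than a weaker bound. An alternative is to work directly with the upper Dini derivative of $\Vert e(t)\Vert_2$, which satisfies $D^+u\le L_{F_\theta}u+\Vert\epsilon_\theta\Vert_\infty$ because the norm is 1-Lipschitz. A standard comparison with $v$ then finishes the proof. I would also note the implicit assumptions: $L_{F_\theta}>0$, $\epsilon_\theta$ bounded, and the existence of both solutions on $[0,t]$.
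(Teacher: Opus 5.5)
Your proposal is correct and follows essentially the paper's route: you add and subtract $F_\theta(x(s))$ along the true trajectory, obtain the same integral inequality, and close it with the sharp integral form of Grönwall plus an integration by parts, which indeed gives exactly $\frac{\Vert\epsilon_\theta\Vert_\infty}{L_{F_\theta}}(e^{L_{F_\theta}t}-1)$. The only cosmetic difference is the order of steps: you replace $\Vert\epsilon_\theta(x(s))\Vert_2$ by its supremum before Grönwall and re-derive the lemma via $U=\int_0^t u$, whereas the paper applies Grönwall with $K(t)=\int_0^t\Vert\epsilon_\theta(x(s))\Vert\,ds$, reaches the intermediate bound $e^{L_{F_\theta}t}\int_0^t \Vert\epsilon_\theta(x(s))\Vert e^{-L_{F_\theta}s}\,ds$, and only then takes the supremum.
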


\begin{proof}
    See appendix \ref{a:lipschitz}.
\end{proof}

Thus, a model with a low $\epsilon_\theta$ but a large $L_{F_\theta}$ can lead to diverging trajectories as $t$ grows, driven by the exponential term in the upper bound as already documented in the literature \cite{Yin_2021,yan2022robustnessneuralordinarydifferential}. On the other hand, a small Lipschitz constant $L_{F_\theta}$ limits the rate of divergence from the true trajectory, but if it becomes too small it will harm the ability of the model to correctly learn the dynamics. As noted in several works \cite{huang2021trainingcertifiablyrobustneural, wang2022quantitativegeometricapproachneuralnetwork, fazlyab2023efficientaccurateestimationlipschitz}, there exists a trade-off between the smoothness of a neural network and its expressiveness, i.e., between $\epsilon_\theta$ and $L_{F_\theta}$. Finding an appropriate balance between the two is still an open research question. 

In this paper, we propose a method to tackle this challenge by minimizing $\Vert J_F(x)-J_{F_\theta}(x)\Vert_2$. Not only will we better learn the dynamic, but we demonstrate below that it allows us to obtain physically informed bounds on the Lipschitz constant of $F_\theta$:

\begin{proposition}
\label{prop:loss}
    If there is $\epsilon>0$ such that 
    \begin{equation*}
        \forall x\in \mathbb{R}^n, \Vert J_F(x)-J_{F_\theta}(x)\Vert_2 \leq \epsilon,
    \end{equation*}
then we have 
    \begin{equation*}
        L_F-\epsilon \leq L_{F_\theta} \leq L_F + \epsilon.
    \end{equation*}
\end{proposition}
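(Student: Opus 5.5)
The plan is to work pointwise with the triangle inequality for the operator norm $\Vert\cdot\Vert_2$ and then pass to suprema. I use the characterization from the Definition, $L_F=\sup_{x}\Vert J_F(x)\Vert_2$ and $L_{F_\theta}=\sup_{x}\Vert J_{F_\theta}(x)\Vert_2$. This implicitly requires $F$ and $F_\theta$ to be differentiable with these suprema finite; if $L_F=+\infty$, the hypothesis together with the upper-bound step below forces $L_{F_\theta}=+\infty$ as well, and conversely, so the statement holds trivially in that case. From now on I assume both quantities are finite.

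For the upper bound, I fix $x\in\mathbb{R}^n$ and write $J_{F_\theta}(x)=J_F(x)+(J_{F_\theta}(x)-J_F(x))$. The triangle inequality for the spectral norm gives
\begin{equation*}
\Vert J_{F_\theta}(x)\Vert_2\le \Vert J_F(x)\Vert_2+\Vert J_{F_\theta}(x)-J_F(x)\Vert_2\le L_F+\epsilon .
\end{equation*}
The right-hand side does not depend on $x$, so taking the supremum over $x$ yields $L_{F_\theta}\le L_F+\epsilon$.

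For the lower bound, I swap the roles of $F$ and $F_\theta$. For every $x$,
\begin{equation*}
\Vert J_F(x)\Vert_2\le\Vert J_{F_\theta}(x)\Vert_2+\epsilon\le L_{F_\theta}+\epsilon .
\end{equation*}
Taking the supremum over $x$ gives $L_F\le L_{F_\theta}+\epsilon$, that is, $L_F-\epsilon\le L_{F_\theta}$. Equivalently, the reverse triangle inequality gives $\bigl|\Vert J_F(x)\Vert_2-\Vert J_{F_\theta}(x)\Vert_2\bigr|\le\epsilon$ pointwise, and a supremum of functions that differ by at most $\epsilon$ everywhere moves by at most $\epsilon$.

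There is no substantial obstacle. The only point needing care is that the supremum in each inequality is taken over a bound uniform in $x$; the uniform hypothesis provides exactly this. One should also note that the supremum characterization of the optimal Lipschitz constant, stated in the Definition, is what links the Jacobian bound to $L_{F_\theta}$. If one wanted to avoid relying on it, one could instead use the mean value inequality $\Vert F_\theta(x)-F_\theta(y)\Vert_2\le\sup_{z\in[x,y]}\Vert J_{F_\theta}(z)\Vert_2\,\Vert x-y\Vert_2$ for the upper bound, and directional derivatives $J_{F_\theta}(x)v=\lim_{h\to0}(F_\theta(x+hv)-F_\theta(x))/h$ for the reverse inequality.
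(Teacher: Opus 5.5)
Your proposal is correct and follows essentially the paper's own argument: bound $\bigl|\Vert J_F(x)\Vert_2-\Vert J_{F_\theta}(x)\Vert_2\bigr|\le\epsilon$ pointwise with the (reverse) triangle inequality, then take the supremum over $x$. Your handling of the infinite case and your justification for passing to the supremum are slightly more careful than the paper's, but the route is the same.
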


\begin{proof}

Using the triangular inequality 
\begin{equation*}
    \Vert \Vert{a} \Vert_2-\Vert b \Vert_2 \Vert_2 \leq \Vert a-b \Vert_2,
\end{equation*}
we have 
\begin{equation*}
    \Vert\Vert J_F(x)\Vert_2 -\Vert J_{F_\theta}(x) \Vert_2 \Vert_2\leq \Vert J_F(x)-J_{F_\theta}(x)\Vert_2 \leq \epsilon,
\end{equation*}
which is equivalent to
\begin{equation*}
    \Vert J_F(x)\Vert_2 - \epsilon \leq \Vert J_{F_\theta}(x) \Vert_2 \leq \Vert J_F(x)\Vert_2 + \epsilon.
\end{equation*}
Since these inequalities hold all $x\in \mathbb{R}^n$, we finally have  
\begin{equation*}
    \sup_x\Vert J_F(x)\Vert_2 - \epsilon \leq \sup_x\Vert J_{F_\theta}(x) \Vert_2 \leq \sup_x\Vert J_F(x)\Vert_2 + \epsilon,
\end{equation*}

i.e., $$L_F-\epsilon \leq L_{F_\theta} \leq L_F + \epsilon. $$
\end{proof}

Proposition \ref{prop:loss} shows that if $J_F$ and $J_{F_\theta}$ are close, then their Lipschitz constant will also be close, in particular the Lipschitz constant of $F_\theta$ will be bounded. Since correctly estimating Lipschitz constants of neural networks is an open research question \cite{jordan2021exactlycomputinglocallipschitz, fazlyab2023efficientaccurateestimationlipschitz, pintore2024computablelipschitzboundsdeep}, this method offers an alternative to constrain $L_{F_\theta}$ while staying close to the true dynamic of the system.

This method is further motivated by the result of a numerical experiment presented in Table \ref{tab:lip_diff}: compared to models trained on short rollouts, those trained
on longer rollouts sacrifice some precision on the offline error $\epsilon_\theta$ for better generalization on $\mathcal{L}_{\text{traj}}$ and better learning of the Jacobian. We argue that this is an explanation to the huge gap in performance between offline and online training reported in several works \cite{frezat2021posteriorilearningquasigeostrophicturbulence, list2024differentiabilityunrolledtrainingneural}. 

\section{Methods}
\label{sec:method}

We have seen that the Jacobian represents a fundamental tool for controlling long-term stability. Our goal is to produce stable integrations in the challenging context of short training rollouts $N$. 

In what follows, we present a regularization strategy that improves the stability of Neural Differential Equations trained on short rollouts, based on a regularization term that includes the $L^2$ distance between the Jacobian $J_{F_\theta}$ of the Neural ODE and $J_F$ the one of the real system, on true trajectories points: 
\begin{equation}
\label{eq:traj_loss}
    \mathcal{L}(\theta) = \mathcal{L}^N_{\text{traj}}(\theta)+\lambda \sum_{X\in \mathcal{T}_F}\sum_{i=0}^N\Vert J_F(x(t_i))-J_{F_\theta}(x(t_i))\Vert_2^2.
\end{equation}

The constraint on the Jacobian we are trying to enforce here is precisely the term introduced in Proposition \ref{prop:loss}: we exploit the link between the Jacobian and the Lipschitz constant of the system to constrain them both in a physically informed manner. We adopt a regularization (or penalization) strategy by weighting the Jacobian constraint with a fixed hyperparameter $\lambda$, as done in \citeauthor{yan2022robustnessneuralordinarydifferential}.  

\subsection{Estimating the norm of Jacobians efficiently}
Such strategy will require computing the Jacobians  of $J_{F_\theta}$ and $J_F$. As any neural network is differentiable through forward-mode Automatic Differentiation (AD), these Jacobian and their Fröbenius norms can be computed directly in practice. However, it remains very expensive and inefficient to evaluate this quantity during training as we have to compute many matrices only to use their norms. 

We can reduce the dimensionality of the estimation by leveraging the relationship between the Jacobian and the directional derivative. Indeed, using the Hutchinson estimator of the trace of a matrix \cite{Hutchinson01011989}, we have:
\begin{equation}
\label{eq:dd}
\begin{aligned}
    \Vert J_F(x) - J_{F_\theta}(x)\Vert_2^2 &= \Vert J_{F-F_\theta}(x)\Vert_2^2 \\
    &= \Vert J_{\epsilon_\theta}(x)\Vert_2^2\\
    &= Tr(J_{\epsilon_\theta}(x)^TJ_{\epsilon_\theta}(x)) \\
        &\approx \mathbb{E}_{v \sim P(v)} [v^T \cdot (J_{\epsilon_\theta}(x)^TJ_{\epsilon_\theta}(x)) \cdot v]\\
        &\approx \mathbb{E}_{v \sim P(v)}[(J_{\epsilon_\theta}(x) \cdot v)^T (J_{\epsilon_\theta}(x) \cdot v)] \\ 
     & \approx \mathbb{E}_{v \sim P(v)}[\Vert J_{\epsilon_\theta}(x) \cdot v\Vert_2^2] 
\end{aligned}
\end{equation}
where $J_{\epsilon_\theta}$ denotes the Jacobian of the error $\epsilon_\theta$. 

The choice for $P(v)$ is usually the standard normal distribution, but any distribution following $\mathbb{E}_{v \sim P(v)}[v^Tv]=I$ is suitable (e.g., Rademecher for minimal variance \cite{Hu_2024}), and the Jacobian norms can be estimated by Monte-Carlo using $v_i$ \textit{i.i.d} sampled from $P(v)$. The Hutchinson trace estimator succeeds in reducing the dimension of the norms we are computing as we do not need to evaluate the full Jacobian anymore but only Jacobian-vector products (JVP).

\begin{proposition}
\label{prop:jvp}
Let $v$ be a vector in $\mathbb{R}^n$ and $h$ a positive real number, we have 
\begin{equation*}
    J_{\epsilon_\theta}(x) \cdot v =D_v \epsilon_\theta(x) := \lim_{h \rightarrow 0} \frac{\epsilon_\theta(x+hv)-\epsilon_\theta(x)}{h},
\end{equation*}
where $D_v \epsilon_\theta(x)$ is the directional derivative of $\epsilon_\theta$ in the direction $v \in \mathbb{R}^n$.
\end{proposition}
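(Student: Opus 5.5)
The statement is the standard identification of a Jacobian-vector product with a directional derivative, so the plan is to derive it directly from the definition of (Fréchet) differentiability of $\epsilon_\theta = F_\theta - F$. I will assume, as the paper implicitly does when it writes $J_F$ and $J_{F_\theta}$, that both $F$ and $F_\theta$ are differentiable at $x$. Then $\epsilon_\theta$ is differentiable at $x$, and by linearity of differentiation $J_{\epsilon_\theta}(x) = J_{F_\theta}(x) - J_F(x)$. This is consistent with the first two lines of \eqref{eq:dd}, up to a sign that is irrelevant there.

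The key step is to write the first-order expansion
\begin{equation*}
\epsilon_\theta(x+u) = \epsilon_\theta(x) + J_{\epsilon_\theta}(x)\, u + r(u), \qquad \frac{\Vert r(u)\Vert_2}{\Vert u\Vert_2} \xrightarrow[u\to 0]{} 0.
\end{equation*}
If $v=0$, both sides of the claimed identity are trivially $0$. Otherwise, take $u = hv$ with $h>0$. Subtracting $\epsilon_\theta(x)$ and dividing by $h$ gives
\begin{equation*}
\frac{\epsilon_\theta(x+hv)-\epsilon_\theta(x)}{h} - J_{\epsilon_\theta}(x)\, v = \frac{r(hv)}{h}.
\end{equation*}
The norm of the right-hand side equals $\Vert v\Vert_2 \cdot \Vert r(hv)\Vert_2/\Vert hv\Vert_2$, which tends to $0$ as $h\to 0^+$. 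Hence the limit defining $D_v\epsilon_\theta(x)$ exists and equals $J_{\epsilon_\theta}(x)\, v$. The same argument works for $h\to 0$ with either sign, so the one-sided reading of "$h$ positive" poses no problem.

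The only real obstacle is the regularity hypothesis. Mere existence of partial derivatives would not suffice, since directional derivatives need not then be linear in $v$. Fréchet differentiability is the right assumption. For neural networks with smooth activations it holds automatically. For ReLU-type networks it holds almost everywhere, and I would state the proposition at points of differentiability. Alternatively, one can argue componentwise: $h\mapsto \epsilon_{\theta,k}(x+hv)$ has derivative $\nabla \epsilon_{\theta,k}(x)\cdot v$ at $0$ by the chain rule, and stacking the components gives $J_{\epsilon_\theta}(x)\,v$.
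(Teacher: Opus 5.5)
Your proof is correct. The paper itself gives no proof of this proposition. It states the standard identification of a Jacobian--vector product with a directional derivative, uses it only to justify computing $J_{\epsilon_\theta}(x)\cdot v$ by forward-mode automatic differentiation, and leaves all regularity assumptions implicit.

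Your first-order expansion with $u=hv$ is the textbook argument. What it adds to the paper is the hypothesis under which the statement actually holds, and that hypothesis is not a formality here. The paper's $F_\theta$ is a ReLU MLP (Appendix~\ref{a:implementation}), so the identity is guaranteed only at points where $F_\theta$ is differentiable, which is almost every point. At the remaining points the one-sided limits still exist, by piecewise linearity, but they are not linear in $v$. The JVP returned by automatic differentiation there depends on a convention for $\mathrm{ReLU}'(0)$ and need not equal those limits.

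Your remark that existence of partial derivatives alone is not enough is also right. The sharp condition for the identity to hold for every $v$ is that all directional derivatives exist and depend linearly on $v$ (G\^ateaux differentiability). For locally Lipschitz maps on $\mathbb{R}^n$, such as ReLU networks, this is equivalent to the Fr\'echet differentiability you assume.
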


Because JVP correspond exactly to directional derivatives (see Proposition \ref{prop:jvp}), they can be computed using the forward mode AD where we propagate both the intermediate states variables $z_i$ of the computational graph (chain rule) and the partial derivatives $\frac{\partial z_{i+1}}{\partial z_i}$, similarly to Hessian-vector product (HVP) approaches \cite{blogICLR, Hu_2024}.

\subsection{A family of Jacobian derived regularizations}
This efficient way of computing Jacobian norms can now be used to define a variety of constraints (or losses) to enforce during training. We define two distinct cases of experimentation which correspond to real cases scenarios. The first is when the true dynamic $F$ is known, for example in the case of known equations discretized into a numerical model. The goal of the Neural ODE is then to avoid certain biases or errors coming from this discretization. The second case is when the dynamic is totally unknown, that is to say we only have access to trajectory points but not to $F$, which is much more challenging for the model. 

\paragraph{Known dynamics}
 If we have access to the true dynamics function $F$ (for example known equations, numerical model), we can directly minimize the difference between the directional derivatives $\Vert J_{F_\theta-F}(x)\cdot v_i\Vert_2^2$ estimated through Monte-Carlo  based on the Hutchinson estimator presented before:
\begin{equation}
    \mathcal{L}_{\text{AD}}(\theta,v_i,x)   = \Vert {\hat{D}^{\text{AD}}}_{v_i}(F_\theta-F)(x)\Vert_2^2
\end{equation}
and 
\begin{equation}
     \mathcal{L}_{\text{AD}}(\theta,\{v_i\}_{i=1}^V)= \frac{1}{V}\sum_{x \in \Omega}\sum_{i=1}^V \mathcal{L}_{\text{AD}}(\theta,v_i,x).
\end{equation}
by computing the JVP (or directional derivative) $\hat{D}^{\text{AD}}$ using Automatic Differentiation, in the directions $\{v_i\}_{i=1}^V$ \textit{i.i.d.} sampled from $\mathcal{N}(0,I)$. In practice when we have to choose the training points $x$, we can assume that enforcing the loss on the trajectory points $x_t \in \Omega = \mathcal{T}_F$ (observed or obtained by integrating the true dynamics $F$) is enough to produce accurate or robust trajectories at inference. Although the estimation of the JVP is exact here, we need access to the true dynamics $F$ which can be unknown or relatively heavy to compute (especially for PDEs). 

\paragraph{Unknown dynamics} 
In the case of unknown dynamics, we leverage Finite Differences (FD) approximations of the directional derivatives to compute an unsupervised loss. We first write a first order approximation of the directional derivatives on trajectory points:
\begin{equation}
    \mathcal{L}_{\text{FD}}(\theta, x_{t_i})=\frac{\Vert(F_\theta-F)(x_{t_i+\Delta t})-(F_\theta-F)(x_{t_i})\Vert_2^2}{\Vert x_{t_i+\Delta t}-x_{t_i}\Vert_2^2} 
\end{equation}
and 
\begin{equation}
 \mathcal{L}_{\text{FD}}(\theta)=\frac{1}{N}\sum_{X \in \mathcal{T}_F}\sum_{i=0}^N\mathcal{L}_{\text{FD}}(\theta, x_{t_i}).
\end{equation}
This is an approximation of the directional derivative in the directions of the trajectory. Indeed, if we compute the derivative $F(x(t_i))$ using finite differences, we can write $x_{t_i+\Delta t} =x_{t_i} + F(x_{t_i})\Delta t $ and we get:
\begin{equation}
    \begin{aligned}
 \frac{\Vert(F_\theta-F)(x_{t_i}+ F(x_{t_i}) \Delta t )-(F_\theta-F)(x_{t_i})\Vert_2^2}{\Vert F(x_{t_i})\Vert_2^2 \Delta t^2} \\
         \rightarrow_{\Delta t \rightarrow 0} 
         \Vert D_{\frac{F(x_{t_i})}{\Vert F(x_{t_i})\Vert_2}}(F_\theta-F)(x_{t_i})\Vert_2^2
    \end{aligned}
\end{equation}
where $v_i=\frac{F(x_{t_i})}{\Vert F(x_{t_i})\Vert_2}=\frac{x_{t_i+\Delta t}- x_{t_i}}{\Vert x_{t_i+\Delta t}- x_{t_i}\Vert_2}$ are the normalized directions of the trajectories and $h=\Vert F(x_{t_i})\Vert_2\Delta t$, see notation in equation \ref{eq:dd}. 

Combined with the finite differences approximation $F(x_{t_i+\Delta t})-F(x_{t_i}) = \frac{x_{t_i+2\Delta t}-2x_{t_i+\Delta t}+x_{t_i}}{\Delta t}$, we reach a fully unsupervised version of $\mathcal{L}_{\text{FD}}$. No evaluation of the true dynamics is required to approximate the same quantity as $\mathcal{L}_\text{AD}$. 


\begin{figure*}[t]  
  \centering
  \includegraphics[width=0.33\textwidth]{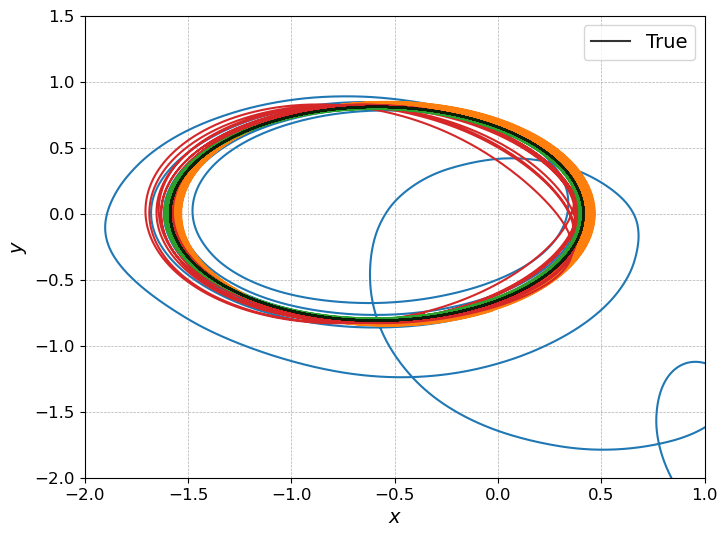}
  \includegraphics[width=0.33\textwidth]{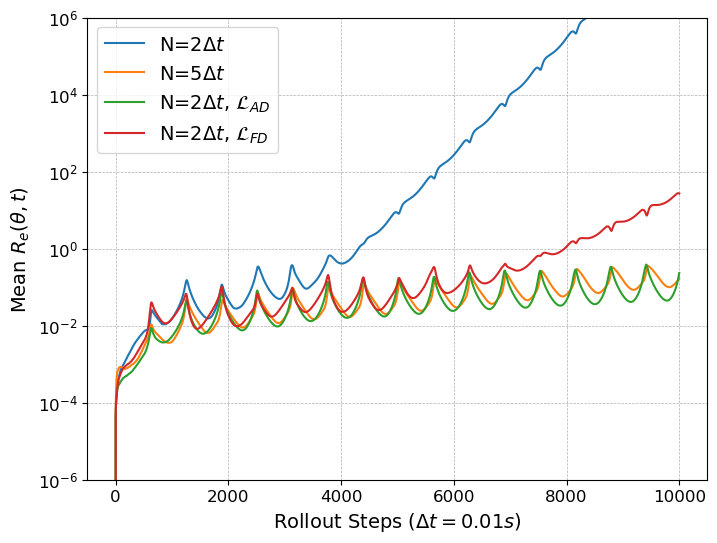}
  \includegraphics[width=0.33\textwidth]{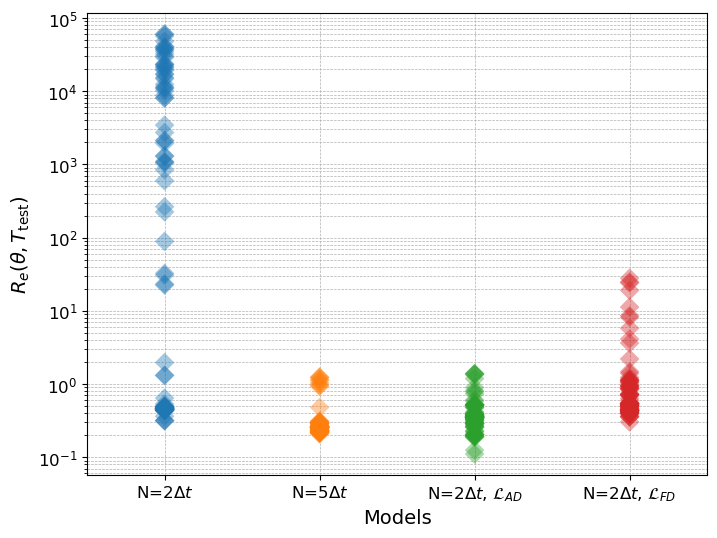}

  \includegraphics[width=0.33\textwidth]{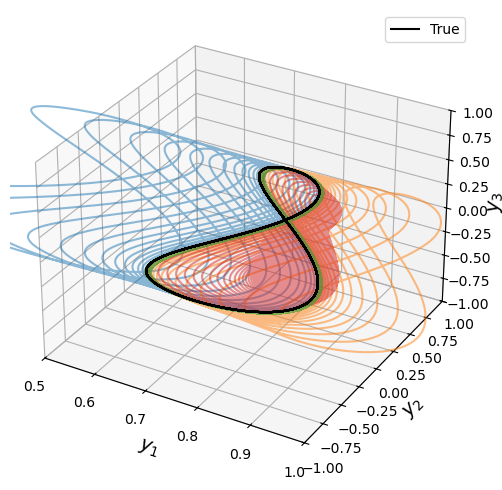}
  \includegraphics[width=0.33\textwidth]{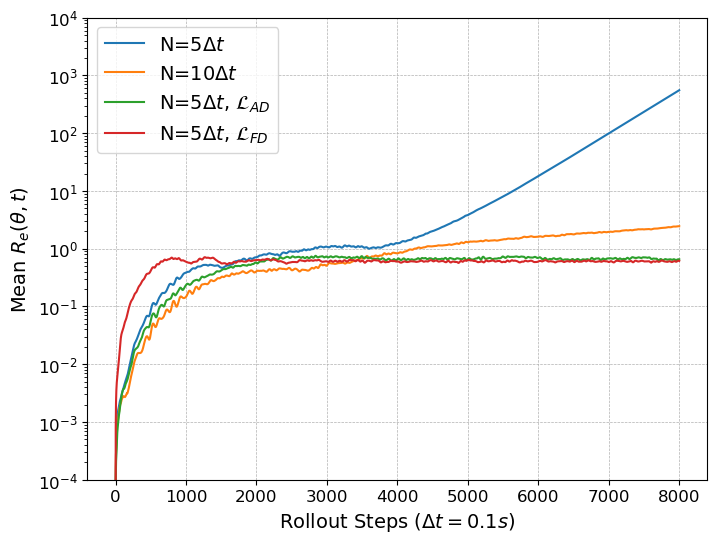}
  \includegraphics[width=0.33\textwidth]{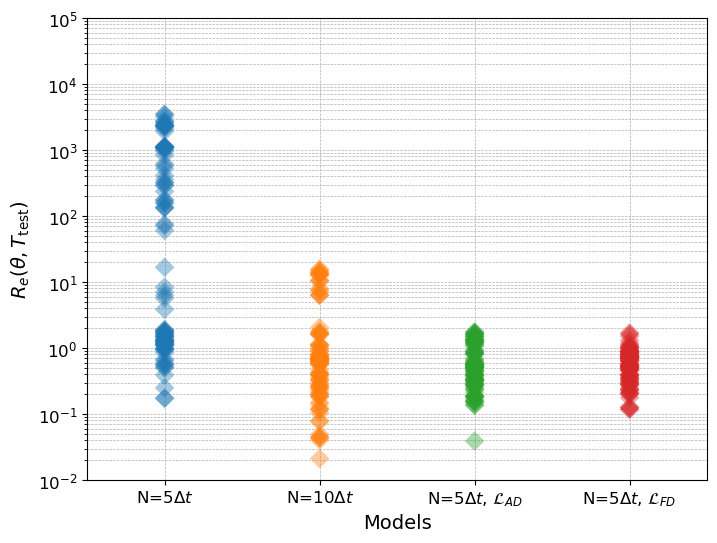}

  \includegraphics[width=0.35\textwidth]{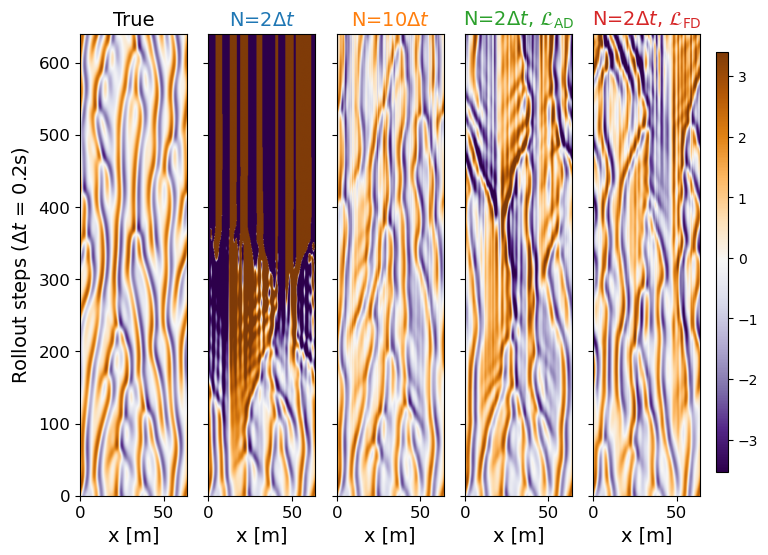}
  \includegraphics[width=0.32\textwidth]{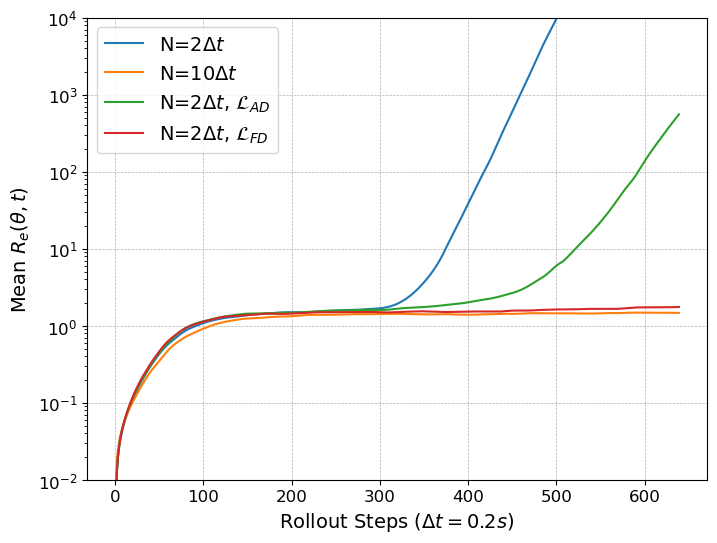}
  \includegraphics[width=0.32\textwidth]{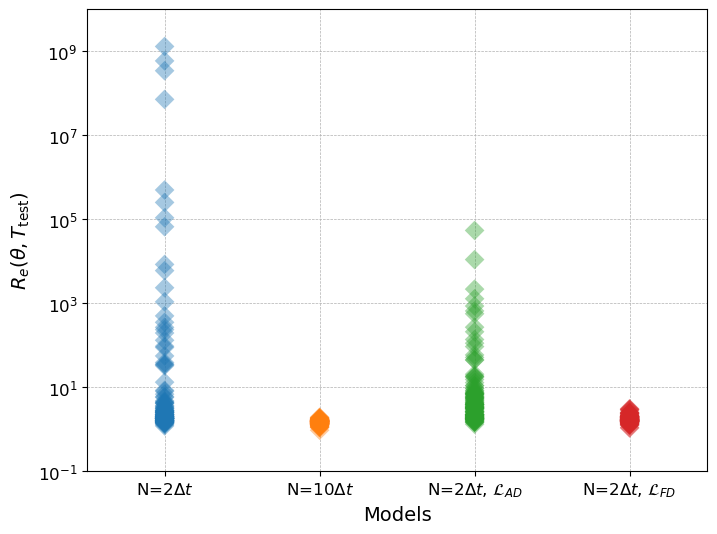}

  \caption{From the left to the right we have for each row: an example of a trajectory, the evolution of the relative error of the state over time and the distribution final values of $R_e(\theta,T_{\text{test}})$. \textbf{Top row:} Two-Body Problem. \textbf{Middle row:} Rigid Body Problem. $N=5\Delta t$ and $N=10\Delta t$ trajectories are cut at 2000 steps for visualization purposes as they drift further away in time. \textbf{Bottom row:} Kuramoto-Sivashinksy.   }
  \label{fig:res}
\end{figure*}

\section{Results}
\label{sec:results}
We test our regularizations on common one-dimensional ODE and PDE cases treated in other papers that cover a variety of dynamics: a first and a second order non-chaotic autonomous systems \cite{white2024stabilizedneuraldifferentialequations} as well as a chaotic fourth-order partial derivative equation \cite{Linot_2023, lippe2023pderefinerachievingaccuratelong}. All present either conservation laws or holonomic constraints that should be preserved during integration. 

In our experiments, we first diagnose the difference between a Neural ODE trained on very short rollouts (producing exploding trajectories in the long-term) and one trained on longer rollouts (stable), with the classical trajectory loss (\ref{eq:traj_loss}). We then apply our Jacobian regularizations during training in the unstable setting (short rollouts), in order to show the improvement that our losses provide on the long-term stability of the system at inference. All methods share the same integration method and time-step $\Delta t$ in order to facilitate their comparison

As in White \yrcite{white2024stabilizedneuraldifferentialequations}, we are interested in the relative error between the predicted state $x_\theta(t)$ and the ground truth $x(t)$ over time, averaged over many trajectories with various initial conditions, referred as $R_e(\theta,t)$. In addition, we analyze the impact of our regularizations on the learned $F_\theta$ and its Jacobian $J_{F_\theta}$ by comparing them to the true $F$ and $J_F$. We focus on the trajectory error (MSE) as well as the offline error $\epsilon_\theta=F-F_\theta$ and jacobian error $J_{e}(\theta)=J_F-J_{F_\theta}$. As our losses are not only comparing trajectories but also penalizing the difference between the learned Jacobian and the one of the system, we expect our models to recover the jacobian properties of the Neural ODE trained on longer rollouts. 

More implementation details on the architecture and training setup are available in appendix \ref{a:implementation}, as well as additional results on conservation laws in \ref{a:res_annex}. 

\subsection{Two-Body Problem}
The Two-Body (TB) Problem describes the motion of two bodies attracting each other with a force inversely proportional to their squared distance, such as the gravitational force \cite{hairer_geometricnumericalintegration}. It is described by the following system of equations:
\begin{equation}
U=\begin{pmatrix}
    x \\
    y \\
    \dot{x} \\
    \dot{y}
\end{pmatrix}, \dot{U} = \begin{pmatrix}
    \dot{x} \\
    \dot{y} \\ 
    \frac{-x}{(x^2+y^2)^{3/2}} \\
    \frac{-y}{(x^2+y^2)^{3/2}}
\end{pmatrix}
\end{equation}
with a body fixed at the origin and $(x,y)$ the Cartesian coordinates of the other body in the orbit plane \cite{white2024stabilizedneuraldifferentialequations}. The angular momentum $L_0$ is conserved here: $x\dot{y}-\dot{x}y=L_0$ for $(x,y)\in \mathcal{T}_F^2$.

As in White \yrcite{white2024stabilizedneuraldifferentialequations} experiments, we train on 40 trajectories with initial conditions $X_0=(1-e, 0, 0, \sqrt{\frac{1 + e}{1 - e}})$ with the eccentricity $e$ sampled from $\mathcal{U}(0.5,0.7)$. To reach a challenging context for the Neural ODE, we use data from trajectories of $T_{\text{train}}=8s$ which is less than a full period, integrated from $F_{\text{TB}}$ with a time-step $\Delta t=0.01s$. We then divide the training trajectories into non-overlapping chunks of $N=2\Delta t$ length. The ground truth and the models both use fourth order Runge-Kutta (RK4) with the same time-step $\Delta t$ as integration scheme to produce their trajectories. 

We then evaluate the long-term stability of our models on 100 test trajectories of duration $T_{\text{test}}=100s$. In the first row of Figure \ref{fig:res}, we see that in the case of known dynamics (green curves), the loss $\mathcal{L}_{\text{AD}}$ allows the model to reach similar performances to those of the Neural ODE trained with a longer rollout $N=5\Delta t$ (orange curves) in terms of relative error $R_e(\theta)$ and final error too. In the case of unknown dynamics, the loss $\mathcal{L_{\text{FD}}}$ also succeeds in stabilizing the trajectories until approximately 7000 steps (red curves), compared to less than 4000 steps without the regularization (blue curves). As expected, this loss is less effective (because the directional derivatives are only approximated here whereas they are exactly computed for $\mathcal{L}_{\text{AD}}$), although it still contains most of the divergence observed in the baseline, only a few trajectories being responsible for the final divergence as observed in the right panel. The conclusions are similar for the relative conservation error, available in Figure \ref{fig:cons} in appendix \ref{a:res_annex}. 

\subsection{Rigid Body Problem}
\label{RB}

The Rigid Body (RB) equations describe the evolution of the angular momentum vector $X$ of a rigid body under Euler's motion laws \cite{hairer_geometricnumericalintegration}: 
\begin{equation}
    U = \begin{pmatrix}
    y_1 \\
    y_2 \\
    y_3
\end{pmatrix}, \dot{U} = \begin{pmatrix}
    0 & -y_3 & y_2 \\
    y_3 & 0 & -y_1 \\
    -y_2 & y_1 & 0 
\end{pmatrix} 
\begin{pmatrix}
    y_1/I_1 \\
    y_2/I_2 \\
    y_3/I_3
\end{pmatrix}
\end{equation}
with $I_1, I_2, I_3$ the principal moments of inertia, where the coordinate axes are the principal axes of the body and the origin of the coordinates fixed at the center of mass of the body \cite{white2024stabilizedneuraldifferentialequations}. The Casimir function $C(U)=\frac{1}{2}(y_1^2+y_2^2+y_3^2$) is conserved along the trajectory and constitutes a holonomic constraint. 

Similarly to White \yrcite{white2024stabilizedneuraldifferentialequations}, we train on 40 trajectories with initial conditions $X_0=(\cos{\phi}, 0, \sin{\phi})$ with $\phi$ sampled from $ \mathcal{U}(0.5,1.5)$. We use data from trajectories of $T_{\text{train}}=15s$ integrated from $F_{\text{RB}}$ with a time-step $\Delta t=0.01s$. We then divide these trajectories into non-overlapping chunks of $N=5\Delta t$ length. The ground truth and the models both use RK4 but with different time-steps: $\Delta t_{\text{model}}=10\Delta t=0.1s$ for faster training.

We then evaluate the long-term stability of our models on 100 test trajectories of duration $T_{\text{test}}=800s$. Looking at the second row of Figure \ref{fig:res}, we see that even the training with a longer rollout $N=10\Delta t$ (orange curves) produces simulations that start diverging after 4000 steps, due to a few exploding trajectories (right panel). In the case of known dynamics ($\mathcal{L_{\text{AD}}}$), we manage to stabilize all trajectories (green curves) and even outperform the $N=10\Delta t$ model. For the case of unknown dynamics (red curves), $\mathcal{L_{\text{FD}}}$ also manages to prevent divergence, however when looking at individual trajectories we observe that the learned dynamic is faulty and sometimes produces collapsing trajectories. The conclusions are the same for the Casimir constraint (see Figure \ref{fig:cons}), with indeed a less good conservation for the $\mathcal{L_{\text{FD}}}$ regularization.

\subsection{Kuramoto-Sivashinsky}

The 1D Kuramoto-Sivashinsky (KS) equation is a fourth order chaotic PDE describing the instabilities of a laminar flame front:
\begin{equation}
    U_t + U_{xx}+ U_{xxxx}+ UU_x=0
\end{equation}
Apart from being non linear, the biharmonic term is particularly difficult to model. Along a trajectory, we have $\int_0^LU(x,t)=0$, $L$ being the length of the spatial domain with periodic boundary conditions.  

To generate ground truth data, we follow the setups of \citeauthor{brandstetter2022liepointsymmetrydata} \yrcite{brandstetter2022liepointsymmetrydata} and \citeauthor{lippe2023pderefinerachievingaccuratelong} \yrcite{lippe2023pderefinerachievingaccuratelong}, using a uniform discretization of the domain into 256 points. We do not sample across different $L$ and $\Delta t$ but keep them fixed at $L=64, \Delta t=0.2$, and generate 512 training trajectories of duration $T_{\text{train}}=140\Delta t$. As in \citeauthor{lippe2023pderefinerachievingaccuratelong} \yrcite{lippe2023pderefinerachievingaccuratelong}, the first 360 steps are truncated and considered as a warmup for the solver (i.e., $t_0=360\Delta t$). The initial conditions are sampled from a distribution over truncated Fourier series with random
coefficients $\{A_m,l_m,\phi_m\}$ as $U_0(x) = \sum_{m=1}^{10} A_m \sin(2\pi l_mx/L+ \phi_m)$. The integration of the ground truth data is done with an implicit Runge-Kutta integration method of the Radau IIA family (order 5) \cite{brandstetter2022liepointsymmetrydata}, and our models use the RK4 solver with the same $\Delta t$. Finally, as for our previous experiments, we divide the training trajectories into non-overlapping chunks of $N=2\Delta t$ steps and evaluate the long-term stability on 128 test trajectories of duration $T_{\text{test}}=640\Delta t$.

In the bottom row of Figure \ref{fig:res}, we observe that both methods also succeeds in limiting ($\mathcal{L_{\text{AD}}}$) or totally preventing ($\mathcal{L_{\text{FD}}}$) diverging trajectories. Surprinsingly, the $\mathcal{L_{\text{AD}}}$ regularization which was the best for the previous experiments, is now outperformed by the $\mathcal{L_{\text{FD}}}$ one. This behavior is a sign of the loss being hard to tune, for reasons that we explain in the next section \ref{table_res}. Nevertheless, it succeeds in delaying the divergence time as well as reducing the intensity of it. Similar results for mass conservation are found in the appendix \ref{a:res_annex}. The artifacts (lines) observed in all simulations are expected when using MLP on this task and could be removed with a better adapted architecture such as a Convolutional Neural Network \cite{Linot_2023}, although it is not what is evaluated here, but rather the explosion of errors over time. 

\subsection{Impact on learned $F_\theta$ and $J_{F_\theta}$} 
\label{table_res}

Table \ref{tab:lip_diff} summarizes the evaluation metrics values of the different experiments and methods. For $\epsilon_\theta$ and $J_e(\theta)$, we report their differences with the metrics $\epsilon_\theta^B$ and $J_e^B(\theta)$ of the short rollout Neural ODE baseline (blue) to highlight the effect induced by each method on them. Their raw values are available in Table \ref{tab:lip} in the annexes. 

\begin{table}[h]
  \caption{Characteristics of the learned $F_\theta$ and $J_{F_\theta}$ compared to the true ones. Lower is best, with best and second best results respectively in bold and underlined.}
  \label{tab:lip_diff}
  \begin{center}
    \begin{small}
      \begin{sc}
        \begin{tabular}{lcc|cc}
          \toprule
          Exp & $F_\theta$ & $\mathcal{L_{\text{traj}}}(\theta)$ &
          $\epsilon_\theta - \epsilon_\theta^B$ &
          $J_{e}(\theta)-J_e^B(\theta)$  \\
          \midrule

          \multirow{5}{*}{TB}
          & \textcolor{Cerulean}{N=2$\Delta t$} & 4.9e15 & 0 & 0  \\
          & \textcolor{orange}{N=5$\Delta t$} & \underline{1.7e-2} & +3.4e-6 & \textbf{-0.15}    \\
          & \textcolor{LimeGreen}{N=2$\Delta t$, $\mathcal{L}_{\text{AD}}$} & \textbf{1.0e-2} & \underline{-2.8e-7} & \underline{-0.03} \\
          & \textcolor{red}{N=2$\Delta t$, $\mathcal{L}_{\text{FD}}$} & 2.64e2  & \textbf{-3.1e-7} & -0.02  \\
          \midrule

          \multirow{5}{*}{RB}
          & \textcolor{Cerulean}{N=5$\Delta t$} & 1.41e4  & \underline{0} & 0   \\
          & \textcolor{orange}{N=10$\Delta t$} & 1.62e0 & +0.79e-7 & -0.55e-3  \\
          & \textcolor{LimeGreen}{N=5$\Delta t$, $\mathcal{L}_{\text{AD}}$} & \underline{1.79e-1} & \textbf{-0.13e-7} & \textbf{-7.39e-3} \\
          & \textcolor{red}{N=5$\Delta t$, $\mathcal{L}_{\text{FD}}$} & \textbf{1.52e-1} & +4.73e-6 & \underline{-1.96e-3}  \\
          \midrule

          \multirow{5}{*}{KS}
          & \textcolor{Cerulean}{N=2$\Delta t$} & 4.77e14 & 0 & 0  \\
          & \textcolor{orange}{N=10$\Delta t$} & \textbf{2.91e0} & +5.49e-3 & \textbf{-2.15e-2}  \\
          & \textcolor{LimeGreen}{N=2$\Delta t$, $\mathcal{L}_{\text{AD}}$} & 8.50e5 & \underline{-0.04e-3} & -0.01e-2  \\
          & \textcolor{red}{N=2$\Delta t$, $\mathcal{L}_{\text{FD}}$} & \underline{3.59e0} & \textbf{-0.08e-3} & \underline{-0.05e-2}  \\
          \bottomrule
        \end{tabular}
      \end{sc}
    \end{small}
  \end{center}
  \vskip -0.1in
\end{table}

When first comparing the Neural ODE trained only with the trajectory loss, we observe that the ones trained on shorter rollouts (blue) perform badly in terms of trajectory loss despite having a smaller offline error $\epsilon_\theta$, whereas the ones trained on longer rollouts (orange) have a smaller trajectory loss although having a bigger offline error, as expected. Our novel diagnosis on the Jacobians reveals that, consistenly across all experiments, the model trained on longer rollouts has a lower Jacobian error $J_e(\theta)$ than the one trained on shorter rollouts. This highlights the importance of correctly learning the Jacobian for long-term integration, which seems to be naturally improved when training on longer rollouts. 

While being trained on short rollouts, our regularized Neural ODEs reproduce this same trend on the Jacobian errors: all are smaller than $J_e^B(\theta)$, although usually with smaller magnitudes than the Neural ODE trained on longer rollouts. This can be explained by the different effect our regularizations have on the offline error, which is almost always improved (except for the $\mathcal{L}_{\text{FD}}$ experiment on Rigid Body where the learned dynamic sometimes produces collapsing trajectories as seen in section \ref{RB}). This shows that the balance between $\epsilon_\theta$ and $J_e(\theta)$ changes with the training rollout, with shorter rollouts also needing a smaller offline to produce stable long-term inferences when longer rollouts only need a smaller $J_e(\theta)$. 

Both of our regularizations succeed in finding this balance on short rollouts and prevent diverging trajectories. The $\mathcal{L}_{\text{AD}}$ loss is the most successful for the Two-Body and Rigid Body experiments, with best or second best metrics in Table \ref{tab:lip_diff}, and also comparable or even better than the longer-rollout baselines. The $\mathcal{L}_{\text{FD}}$ regularization also obtains good results but can be too restrictive (RB experiment). This was expected as $\mathcal{L}_{\text{AD}}$ is a far more informative regularization, having access to the true $J_F$. 

However this loss also has its limitations, as shown with the KS experiment where it fails in preventing all divergence. This can be explained by the Jacobian $J_F$ (involving Fourier Transforms) becoming very large compared to the Jacobian of a Neural Network, making our loss $L_{AD}$ harder to tune. On the contrary, since $\mathcal{L}_{\text{FD}}$ compares similar quantities (finite differences), the loss is easier to tune, less costly to compute and yields better results, performing almost as well as the $N=10\Delta t$ model in terms of trajectory loss. 

\section{Conclusion}
\label{sec:conclu}

In this paper, we successfully stabilized the long-term integration of Neural ODEs trained on short rollouts by regularizing the Jacobian of the learned dynamic. We proved the efficiency of our regularizations in both cases of known and unknown dynamics and across ODE and PDEs of various dimensions. In particular the $\mathcal{L}_{\text{FD}}$ loss has a low computational cost and can be easily scaled to higher dimensional problems, although it usually does not produce as accurate trajectories as the $\mathcal{L}_{\text{AD}}$ regularization. 

We showed that our Jacobian loss is key in the success of online training and can be used to reduce the length of the training rollout without losing stability. This study is a step forward in understanding the balance between the error between the dynamic functions, the error between their Jacobians and the training rollout, which could be of interest when the solver we use is only partially differentiable \cite{list2024differentiabilityunrolledtrainingneural} as in current climate models. Finally, our methods could be directly applied to hybrid settings where the Neural ODE only replaces a part of the dynamic. 

\section*{Impact Statement}

This paper presents work whose goal is to advance the field
of Machine Learning. There are many potential societal
consequences of our work, none which we feel must be
specifically highlighted here. 

\bibliography{biblio}
\bibliographystyle{icml2026}

\newpage
\appendix
\onecolumn
\section{Lipschitz constant and trajectory error}
\label{a:lipschitz}
\textit{In this section, we denote $\Vert \cdot \Vert_2$ as $\Vert \cdot \Vert$ and $L_{F_\theta}$ as $L_\theta$ for lighter notations.}

Let us suppose we have two dynamics, the true equation with exact resolution $x$ and dynamics $F$, the other $x_\theta$ from the learned dynamic $F_\theta$:
\begin{equation}
\label{eq:basis}
\left\{
    \begin{array}{ll}
        \dot{x}(t) = F(x(t)) \\
        x(t=0) = x_0
    \end{array}
\right. \quad \text{and} \quad
\left\{
    \begin{array}{ll}
        \dot{x_\theta}(t) = F_\theta(x_\theta(t))  \\
        y(t=0) = x_0
    \end{array}
\right.
\end{equation}

We make the assumption that $F$ and $F_\theta$ are, respectively, L and $L_\theta$-Lipschitz. We define $\epsilon_\theta(x(t))=F(x(t))-F_\theta(x(t))$ the offline error. We prove the following inequalities:

\begin{proposition}
\begin{equation}
    \forall t \geq 0, \Vert x(t)-x_\theta(t)\Vert \leq \exp(L_\theta t)  \int_0^t \Vert\epsilon_\theta(x(s))\Vert \exp(-L_\theta s)ds,
\end{equation}

and if $\epsilon_\theta$ is bounded by $\Vert\epsilon_\theta\Vert_\infty$:
\begin{equation*}
\begin{aligned}
        \Vert x(t)-x_\theta(t)\Vert &\leq \Vert\epsilon_\theta\Vert_\infty \exp(L_\theta t) \int_0^t  \exp(-L_\theta s)ds   \\
        &\leq \Vert\epsilon_\theta\Vert_\infty \exp(L_\theta t)    \frac{1}{L_\theta}(1-\exp{(-L_\theta t)})  \\
        &\leq    \frac{\Vert\epsilon_\theta\Vert_\infty}{L_\theta}(\exp{(L_\theta t)-1}).
\end{aligned}
\end{equation*}
\end{proposition}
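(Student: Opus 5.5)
I would prove the first inequality with a Grönwall-type argument applied to the error $e(t) = x(t) - x_\theta(t)$, and then obtain the second chain of inequalities by direct integration.

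\textbf{Step 1: write the error as an integral.} Subtracting the two systems in (\ref{eq:basis}) and using $e(0)=0$ gives
\begin{equation*}
e(t) = \int_0^t \big(F(x(s)) - F_\theta(x_\theta(s))\big)\,ds .
\end{equation*}
The key is to split the integrand by inserting $F_\theta(x(s))$:
\begin{equation*}
F(x(s)) - F_\theta(x_\theta(s)) = \epsilon_\theta(x(s)) + \big(F_\theta(x(s)) - F_\theta(x_\theta(s))\big).
\end{equation*}
The first term is the offline error evaluated on the true trajectory. The second term is controlled by the Lipschitz property of $F_\theta$ alone. Note that only $L_\theta$ appears, so the Lipschitz constant $L$ of $F$ is never used. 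Taking norms and using the triangle inequality, $u(t) := \Vert e(t)\Vert$ satisfies
\begin{equation*}
u(t) \le \int_0^t \Vert\epsilon_\theta(x(s))\Vert\,ds + L_\theta \int_0^t u(s)\,ds .
\end{equation*}

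\textbf{Step 2: apply Grönwall in integral form.} Set $\phi(t) = \int_0^t u(s)\,ds$ and $a(t) = \int_0^t \Vert\epsilon_\theta(x(s))\Vert\,ds$. Then
\begin{equation*}
\frac{d}{dt}\big(e^{-L_\theta t}\phi(t)\big) \le e^{-L_\theta t} a(t),
\end{equation*}
which integrates to
\begin{equation*}
\phi(t) \le \int_0^t e^{L_\theta(t-r)} a(r)\,dr .
\end{equation*}
Substituting back into $u \le a + L_\theta \phi$ and exchanging the order of integration (Fubini on the region $0\le s\le r\le t$) gives
\begin{equation*}
u(t) \le a(t) + L_\theta \int_0^t \Vert\epsilon_\theta(x(s))\Vert \int_s^t e^{L_\theta(t-r)}\,dr\,ds .
\end{equation*}
The inner integral equals $(e^{L_\theta(t-s)}-1)/L_\theta$. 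The $-1$ part cancels $a(t)$ exactly, which leaves
\begin{equation*}
u(t) \le e^{L_\theta t}\int_0^t \Vert\epsilon_\theta(x(s))\Vert e^{-L_\theta s}\,ds .
\end{equation*}
This is the claimed bound.

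\textbf{Alternative route via a differential inequality.} One could differentiate $\Vert e(t)\Vert$ directly, using $\frac{d}{dt}\Vert e\Vert \le \Vert \dot e\Vert$ wherever $e\neq 0$. I prefer the integral form because it avoids the non-differentiability of the norm at $e=0$, in particular at $t=0$.

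\textbf{Step 3: the uniform bound.} Bound $\Vert\epsilon_\theta(x(s))\Vert \le \Vert\epsilon_\theta\Vert_\infty$ and compute
\begin{equation*}
\int_0^t e^{-L_\theta s}\,ds = \frac{1-e^{-L_\theta t}}{L_\theta}.
\end{equation*}
Multiplying by $e^{L_\theta t}$ yields $\frac{\Vert\epsilon_\theta\Vert_\infty}{L_\theta}\big(e^{L_\theta t}-1\big)$. This requires $L_\theta>0$; if $L_\theta = 0$, the limit gives $\Vert\epsilon_\theta\Vert_\infty\, t$.

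\textbf{Main obstacle.} The only delicate point is Step 2: performing the Grönwall manipulation cleanly so that the exact weighted form $e^{L_\theta t}\int_0^t \Vert\epsilon_\theta\Vert e^{-L_\theta s}\,ds$ comes out, rather than the cruder $a(t)e^{L_\theta t}$. The Fubini cancellation is what delivers the exact form. Everything else is routine.
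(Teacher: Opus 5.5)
Your proof is correct and follows the paper's argument. You use the same insertion of $F_\theta(x(s))$ to get $u(t)\le K(t)+L_\theta\int_0^t u(s)\,ds$, then Grönwall with $K(t)=\int_0^t\Vert\epsilon_\theta(x(s))\Vert\,ds$ and $\psi\equiv L_\theta$, then the same cancellation of $K(t)$; the only differences are cosmetic: you derive the Grönwall bound inline via the integrating factor on $\int_0^t u$ instead of citing the lemma, and you collapse the double integral with Fubini where the paper integrates by parts.
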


\begin{proof}
The second inequality comes from the first one by integrating the exponential. The first one uses Grönwall's lemma:
\begin{lemma}Grönwall's inequality: \\
\text{Let $\psi\geq 0, \phi, K$ be continuous functions.} \\
If
$$\forall t\geq t_0, \phi(t) \leq K(t) + \int_{t_0}^t{\psi(s)\phi(s) ds},$$ 
then \\
$$\forall t\geq t_0, \phi(t) \leq K(t) + \int_{t_0}^t{K(s)\psi(s) \exp{\left( \int_s^t{\psi(u) du}  \right)} ds}.$$
\end{lemma}

As $x$ and $x_\theta$ are continuous, with the Fundamental theorem of calculus we can write: 
\begin{equation*}
\begin{aligned}
        \Vert x(t)-x_\theta(t)\Vert &= \Vert x_0 + \int_0^t{\dot{x}(s)ds} - x_0 - \int_0^t{\dot{x_\theta}(s)ds}\Vert  \\
        &= \Vert \int_0^t{F(x(s))-F_\theta(x_\theta(s)) ds} \Vert \quad \text{using \ref{eq:basis}} \\
        &= \Vert \int_0^t{F(x(s))- F_\theta(x(s)) + F_\theta(x(s)) - F_\theta(x_\theta(s)) ds} \Vert \quad  \\
        &\leq  \int_0^t{\Vert\epsilon_\theta(x(s))\Vert ds}+  \int_0^t{\Vert F_\theta(x(s))-F_\theta(x_\theta(s))\Vert ds}   \quad \text{(triangular inequality)} \\
        \Vert x(t)-x_\theta(t)\Vert &\leq \int_0^t{\Vert\epsilon_\theta(x(s))\Vert ds} + L_\theta \int_0^t{\Vert x(s)-x_\theta(s)\Vert ds}  \quad \text{as $F_\theta$ is $L_\theta$-Lipschitz.}
\end{aligned}
\end{equation*}

We can apply Grönwall's inequality with $\phi(t) = \Vert x(t)-x_\theta(t)\Vert$, $\psi(t)= L_\theta \geq 0$ and $K(t)= \int_0^t{\Vert\epsilon_\theta(x(s))\Vert ds}$:

\begin{equation*}
\begin{aligned}
        \Vert x(t)-x_\theta(t)\Vert
        &\leq \int_0^t{\Vert\epsilon_\theta(x(s))\Vert ds} + \int_0^t{ L_\theta\ \left( \int_0^s{\Vert\epsilon_\theta(x(u))\Vert du}\right) \exp{\left( \int_s^t{L_\theta du}  \right) ds} }\\
        &\leq K(t) + L_\theta \exp{(L_\theta t)}\int_0^t{ \left( \int_0^s{\Vert\epsilon_\theta(x(u)\Vert du}\right) \exp{(-L_\theta s)}ds} .
\end{aligned}
\end{equation*}

For the integral, we can use integration by parts (IPP), with

\begin{equation*}
\left\{
    \begin{array}{ll}
        h(s)= \int_0^s{\Vert\epsilon_\theta(x(u))\Vert du} =K(s)\\
        \dot{h}(s) = \Vert\epsilon_\theta(x(s))\Vert
    \end{array}
\right. \quad \text{and} \quad
\left\{
    \begin{array}{ll}
        g(s) = -\frac{1}{L_\theta}\exp(-L_\theta s) \\
        \dot{g}(s)=\exp{(-L_\theta s)} 
    \end{array}
\right.
\end{equation*}

We have: 
\begin{equation*}
\begin{aligned}
       \int_0^t{h(s)\dot{g}(s)ds} &= \left[ h(s)g(s) \right]_0^t - \int_0^t{\dot{h(s)}g(s) ds} \\ 
       &= \left[ K(s) \frac{-1}{L_\theta}\exp(-L_\theta s) \right]_0^t - \int_0^t \frac{-1}{L_\theta}\exp(-L_\theta s)\Vert\epsilon_\theta(x(s))\Vert ds \\
       \int_0^t{h(s)\dot{g}(s)ds}&= \frac{-1}{L_\theta}\exp(-L_\theta t) K(t) + \frac{1}{L_\theta}\int_0^t \Vert\epsilon_\theta(x(s))\Vert\exp(-L_\theta s)ds
\end{aligned}
\end{equation*}

Plugging it back to the main inequality,
\begin{equation*}
\begin{aligned}
        |x(t)-y(t)| &\leq K(t) + L_\theta \exp{(L_\theta t)} \left( \int_0^t{\left(\int_0^s{\Vert\epsilon_\theta(x(u))\Vert du}\right)  \exp{(-L_\theta s)}ds} \right) \\
        &\leq K(t) + L_\theta \exp(L_\theta t) \left( \frac{-1}{L_\theta}\exp(-L_\theta t)K(t) + \frac{1}{L_\theta} \int_0^t \Vert\epsilon_\theta(x(s))\Vert \exp(-L_\theta s)ds\right) \\
        &\leq K(t)  -K(t) + \exp(L_\theta t)\int_0^t \Vert\epsilon_\theta(x(s))\Vert \exp(-L_\theta s)ds \\
        &\leq  \exp(L_\theta t)  \int_0^t \Vert\epsilon_\theta(x(s))\Vert \exp(-L_\theta s)ds.
\end{aligned}
\end{equation*}
\end{proof}

\newpage
\section{Implementation details}
\label{a:implementation}
This section presents the different parameters and protocols to reproduce our experiments. All of our experiments are realized in jax, with code available on GitHub. 

\subsection{Data generation} 
For Two-Body and Rigid Body experiments, the ground truth trajectories are integrated using fourth order Runge Kutta (RK4) with time-step $\Delta t=0.01s$. For the Rigid Body Problem, we use the same parameter values $(I_1, I_2, I_3)=(1.6, 1.0, 2/3)$ as in White \yrcite{white2024stabilizedneuraldifferentialequations}.

For Kuramoto-Sivashinsky, we follow the protocol from \citeauthor{lippe2023pderefinerachievingaccuratelong}, using \citeauthor{brandstetter2022liepointsymmetrydata} implementation. We only alter the generation script by fixing $\Delta t=0.2s$ and $L=64$, instead of varying them. The code can then be used to produce our data with the following arguments: 
\begin{itemize}
    \item Training set: \texttt{--train\_samples=512 --nt=500}
    \item Validation set: \texttt{--valid\_samples=128 --nt=500}
    \item Test set:  \texttt{--test\_samples=128 --nt=1000 --nt\_effective=640 --end\_time=200}
\end{itemize}

Finally, the training and validation trajectories are chunked into non-overlapping segments of $N$ length as in White \yrcite{white2024stabilizedneuraldifferentialequations}, in order for all the models to use the exact same points independently of their training rollout. This setup makes the validation particularly challenging, with no information on the long-term performances of the model given at this step.  

\subsection{Experimental setting}
We use the MLP architecture from \citeauthor{Yin_2021} for the Neural ODE models (i.e., for $F_\theta$). It is composed of three linear layers with activation ReLU and hidden size 200. All Neural ODEs use RK4 as integration scheme as well, and are trained using Adam optimizer \cite{kingma2017adammethodstochasticoptimization} with learning rate $l_r=0.001$, $\beta_1=0.9$ and $\beta_2=0.999$. For the $\mathcal{L}_{\text{AD}}$ loss, we sample $V=10$ directions $\{v_i\}_{i=1}^V$ from $\mathcal{N}(0,I)$ for each experiment.  Table \ref{tab:params} summarizes the parameters of the different models and experiments.

\def \LambdTB{\makecell{  1e-11, 5e-12, 1e-12, 5e-13, 1e-13  }}

\def \LambdRB{\makecell{  1e-2, 1e-3, 1e-4,
1e-5 1e-6  }}

\def \LambdKSFD{\makecell{  1e-4, 1e-5, 1e-6,
1e-7, 1e-8  }}

\begin{table}[h]
\caption{Summary of model and experiment parameters}
\label{tab:params}
  \begin{center}
    \begin{small}
      \begin{sc}
    \begin{tabular}{|l|c|c|c|}
    \hline
    \textbf{Parameters} & \textbf{Two-Body} & \textbf{Rigid Body} & \textbf{Kuramoto-Sivashinsky} \\
    \hline
    $U$ dimension  & 4 & 3 & 256 \\
    \hline
    $\Delta t / \Delta t_{\text{model}}$ (s) & 0.01 / 0.01 & 0.01 / 0.1 & 0.2 / 0.2 \\
    \hline
    $T_{\text{train}} / T_{\text{val}} / T_{\text{test}}$ (s) & 8 / 8 / 100 & 15 / 15 / 800 & 28 / 28 / 128   \\
    \hline
    $N_{\text{train}} / N_{\text{val}} / N_{\text{test}}$ (s) & 2 / 2 / 10000 & 5 / 5 / 8000 & 2 / 2 / 640   \\
    \hline
    Nb traj train/val / test & 40 / 40 / 100 & 40 / 40 / 100 & 512 / 128 / 128  \\
    \hline
    Nb epochs & 4500 & 4500 & 1000 \\
    \hline
    $\lambda_{\text{AD}}$ interval & \LambdTB & \LambdRB & 5e-12, 1e-12, 5e-13, 1e-13 \\
    \hline
    $\lambda_{\text{AD}}$ optimal & 5e-13 & 1e-6 & 5e-13 \\
    \hline
    $\lambda_{\text{FD}}$ interval & \LambdTB & \LambdRB & \LambdKSFD \\
    \hline
    $\lambda_{\text{FD}}$ optimal & 5e-13 & 1e-2 & 1e-7 \\
    \hline
    \end{tabular}
\end{sc}
\end{small}
\end{center}
\end{table}

The best model is determined based on the mean trajectory loss on a validation set of chunked trajectories. The optimal regularization coefficient $\lambda$ is determined by grid-search for each experiment, also based on the same metric. Figure \ref{fig:val} shows its evolution across the tested hyperparameters $\lambda$. As the validation is done on very short rollouts, the mean validation MSEs are quite similar. The $\mathcal{L_{\text{AD}}}$ regularization was particularly difficult to tune for the KS experiment, with unstable training when $\lambda\geq$ 1E-10, and could benefit from finer grid-search exploration. 

\begin{figure}[h]
    \centering
    \includegraphics[width=0.33\linewidth]{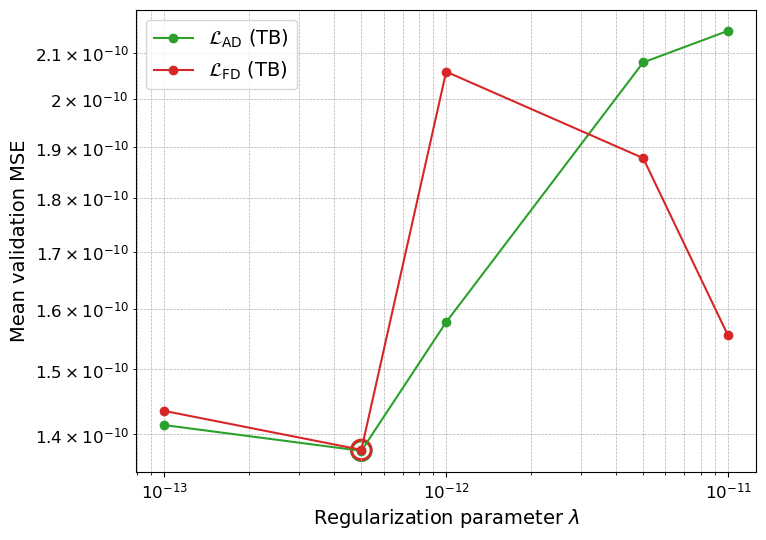}
    \includegraphics[width=0.33\linewidth]{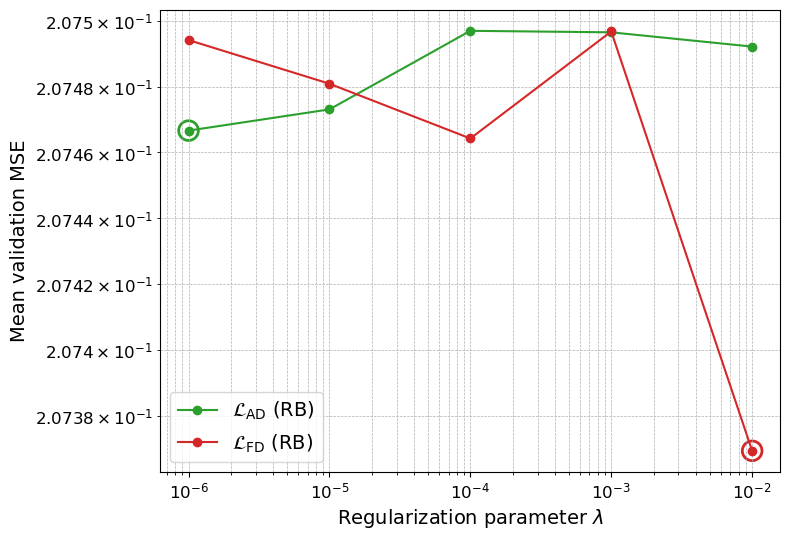}
    \includegraphics[width=0.33\linewidth]{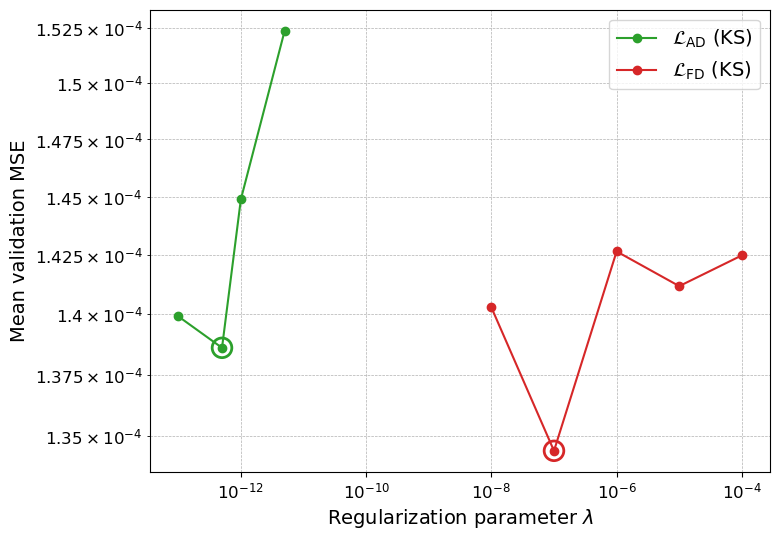}
    \caption{Mean MSE on validation set for the optimization of the $\lambda$ parameter, for the Two-Body Problem (left) and the Rigid Body Problem (right). The best outcome is circled.}
    \label{fig:val}
\end{figure}

\subsection{Evaluation}
We derive all evaluation metrics reported in Table \ref{tab:lip} on the ground truth test trajectories points $x\in \mathcal{T}_F^{\text{test}}$.

For the $J_e(\theta)$ metric, we use the true jacobians obtained by forward-mode AD, column-by-column (jax.jacfwd), except for the Kuramoto-Sivashinsky (KS) experiment where it is too costly to compute and is estimated using the Hutchinson Monte-Carlo described in equation \ref{eq:dd}, with $V=4$ directions sampled from $\mathcal{N}(0,I)$.

\section{Additional results}
\label{a:res_annex}
\begin{table}[h]
  \caption{Characteristics of the learned $F_\theta$ and $J_{F_\theta}$ compared to the true ones. Lower is best, with best and second best results respectively in bold and underlined. As the values of $J_e(\theta)$ for KS are very high, we report up to a constant $C=265560.9$ to highlight their differences.}
  \label{tab:lip}
  \begin{center}
    \begin{small}
      \begin{sc}
        \begin{tabular}{lcccc}
          \toprule
          Exp & $F_\theta$ & $\mathcal{L_{\text{traj}}}(\theta)$ &
          $\epsilon_\theta$ &
          $J_{e}(\theta)$  \\
          \midrule

          \multirow{5}{*}{TB}
          & \textcolor{Cerulean}{N=2$\Delta t$} & 4.9e15 & 1.42e-6 & 5.74  \\
          & \textcolor{orange}{N=5$\Delta t$} & \underline{1.7e-2} & 4.82e-6 & \textbf{5.59}    \\
          & \textcolor{LimeGreen}{N=2$\Delta t$, $\mathcal{L}_{\text{AD}}$} & \textbf{1.0e-2} & \underline{1.14e-6} & \underline{5.71} \\
          & \textcolor{red}{N=2$\Delta t$, $\mathcal{L}_{\text{FD}}$} & 2.64e2  & \textbf{1.11e-6} & 5.72  \\
          \midrule

          \multirow{5}{*}{RB}
          & \textcolor{Cerulean}{N=5$\Delta t$} & 1.41e4  & \underline{4.22e-7} & 7.84e-3   \\
          & \textcolor{orange}{N=10$\Delta t$} & 1.62e0 & 5.01e-7 & 7.29e-3  \\
          & \textcolor{LimeGreen}{N=5$\Delta t$, $\mathcal{L}_{\text{AD}}$} & \underline{1.79e-1} & \textbf{4.09e-7} & \textbf{4.51e-4} \\
          & \textcolor{red}{N=5$\Delta t$, $\mathcal{L}_{\text{FD}}$} & \textbf{1.52e-1} & 5.16e-6 & \underline{5.88e-3}  \\
          \midrule

          \multirow{5}{*}{KS}
          & \textcolor{Cerulean}{N=2$\Delta t$} & 4.77e14 & 2.51e-3 & C+5.63e-2  \\
          & \textcolor{orange}{N=10$\Delta t$} & \textbf{2.91e0} & 8.00e-3 & C+\textbf{3.48e-2}  \\
          & \textcolor{LimeGreen}{N=2$\Delta t$, $\mathcal{L}_{\text{AD}}$} & 8.50e5 & \underline{2.47e-3} & C+5.62e-2  \\
          & \textcolor{red}{N=2$\Delta t$, $\mathcal{L}_{\text{FD}}$} & \underline{3.59e0} & \textbf{2.43e-3} & C+\underline{5.58e-2}  \\
          \bottomrule
        \end{tabular}
      \end{sc}
    \end{small}
  \end{center}
  \vskip -0.1in
\end{table}

\begin{figure}[h]
    \centering
    \includegraphics[width=0.49\linewidth]{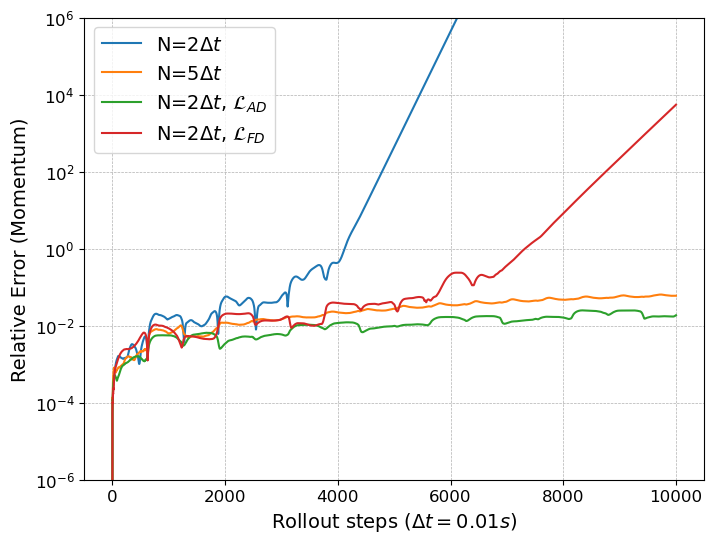}
    \includegraphics[width=0.49\linewidth]{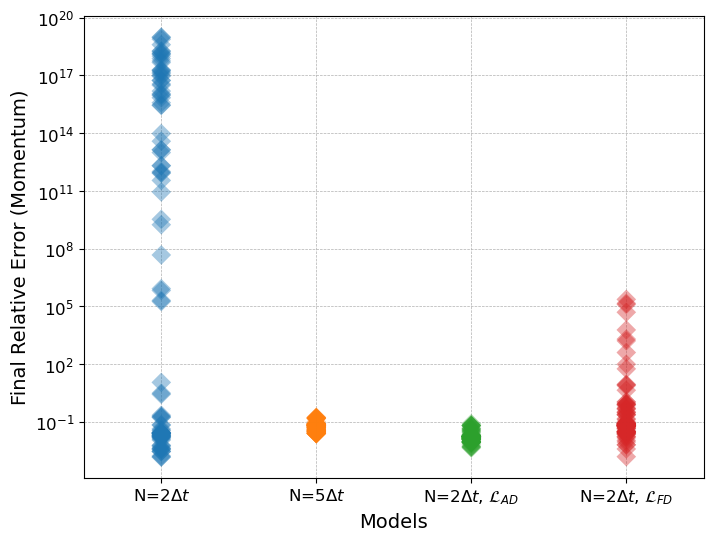}
    
    \includegraphics[width=0.49\linewidth]{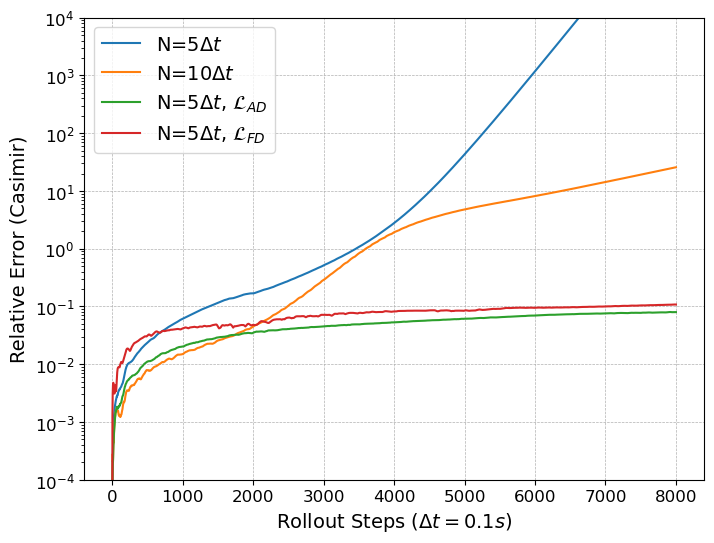}
    \includegraphics[width=0.49\linewidth]{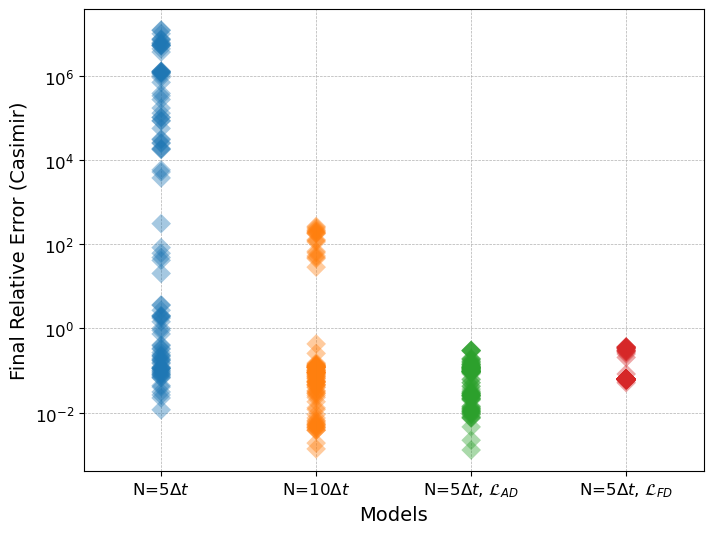}

    \includegraphics[width=0.49\linewidth]{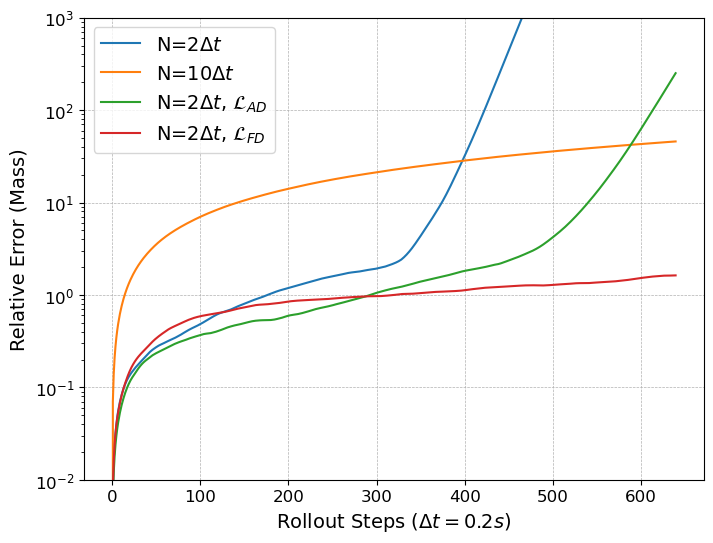}
    \includegraphics[width=0.49\linewidth]{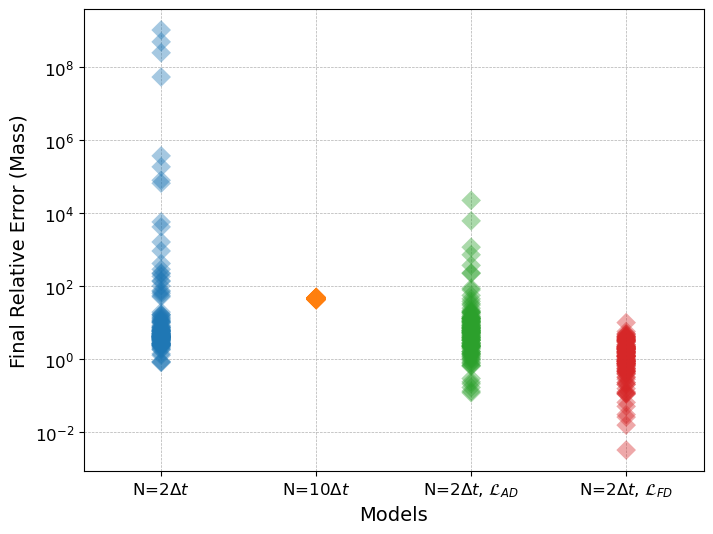}
    \caption{Mean Relative Error of the conserved properties over time (left) and distribution of final values (right). \textbf{Top row:} Two-Body Problem. \textbf{Middle row:} Rigid Body Problem. \textbf{Bottom row}: Kuramoto-Sivashinsky}
    \label{fig:cons}
\end{figure}


\end{document}